\documentclass{article}

\usepackage{arxiv}

\usepackage[utf8]{inputenc} 
\usepackage[T1]{fontenc}    
\usepackage{hyperref}       
\usepackage{url}            
\usepackage{booktabs}       
\usepackage{amsfonts}       
\usepackage{nicefrac}       
\usepackage{microtype}      
\usepackage{graphicx}
\usepackage{doi}


\usepackage{amsfonts}
\usepackage{amsthm,physics}
\usepackage{amsmath}
\usepackage{amscd}
\usepackage{t1enc}
\usepackage[mathscr]{eucal}
\usepackage{indentfirst}
\usepackage{graphicx}
\usepackage{graphics}
\usepackage{pict2e}
\usepackage{epic}
\numberwithin{equation}{section}
\usepackage{epstopdf}
\usepackage{amsmath,amsthm,verbatim,amssymb,amsfonts,amscd,graphicx}
\usepackage{mathtools}
\usepackage[dvipsnames]{xcolor}
\usepackage{authblk}
\usepackage{bm}
\usepackage{soul}

\usepackage[dvipsnames]{xcolor}
\usepackage[margin=3cm]{caption}
\usepackage{amsmath}
\usepackage{amsthm}
\usepackage{amssymb}
\usepackage{amscd}
\usepackage{mathtools}
\usepackage{hyperref}
\usepackage{subfigure}
\usepackage{stmaryrd}
\usepackage{booktabs}
\usepackage{mdframed}
\usepackage{upgreek}
\usepackage{mathtools} \mathtoolsset{showonlyrefs=true} \MakeRobust{\eqref}
\usepackage{listings}
\usepackage{color}
\usepackage{fullpage}
\usepackage{stackrel}
\usepackage{textcomp}
\usepackage{enumerate}
\usepackage{accents}
\usepackage{graphicx}
\usepackage{epstopdf}
\usepackage{placeins}
\usepackage{bbm}
\usepackage{bm}
\usepackage{accents}
\usepackage[mathscr]{eucal}
\usepackage[linesnumbered,boxed,titlenumbered,ruled,nofillcomment]{algorithm2e}
\usepackage[T1]{fontenc}
\usepackage{lmodern}
\usepackage{scalerel}
\usepackage{wrapfig}
\usepackage{algorithmic}
\usepackage{float}


\definecolor{lbcolor}{rgb}{0.9,0.9,0.9}
\lstset{
	backgroundcolor=\color{lbcolor},
	tabsize=4,
	rulecolor=,
	language=matlab,
	basicstyle=\ttfamily,
	upquote=true,
	aboveskip={1.5\baselineskip},
	columns=fixed,
	showstringspaces=false,
	extendedchars=true,
	breaklines=true,
	prebreak = \raisebox{0ex}[0ex][0ex]{\ensuremath{\hookleftarrow}},
	frame=single,
	showtabs=false,
	showspaces=false,
	showstringspaces=false,
	identifierstyle=\ttfamily,
	keywordstyle=\color[rgb]{0,0,1},
	commentstyle=\color[rgb]{0.133,0.545,0.133},
	stringstyle=\color[rgb]{0.627,0.126,0.941},
	numbers=left,
}


\setcounter{MaxMatrixCols}{20}

\theoremstyle{plain}
\newtheorem{theorem}{Theorem} \numberwithin{theorem}{section}

\theoremstyle{definition}

\definecolor{mydarkblue}{rgb}{0,0.08,0.45}
\hypersetup{ %
	colorlinks=false,
	linkcolor=mydarkblue,
	citecolor=mydarkblue,
	filecolor=mydarkblue,
	urlcolor=mydarkblue}








\newcommand{\Dc}[0]{\mathcal{D}}











\definecolor{mypink1}{rgb}{0.858, 0.188, 0.478}
\definecolor{mypink2}{RGB}{219, 48, 122}
\definecolor{mypink3}{cmyk}{0, 0.7808, 0.4429, 0.1412}
\definecolor{mygray}{gray}{0.6}

\newlength{\leftstackrelawd}
\newlength{\leftstackrelbwd}
\def\leftstackrel#1#2{\settowidth{\leftstackrelawd}%
{${{}^{#1}}$}\settowidth{\leftstackrelbwd}{$#2$}%
\addtolength{\leftstackrelawd}{-\leftstackrelbwd}%
\leavevmode\ifthenelse{\lengthtest{\leftstackrelawd>0pt}}%
{\kern-.5\leftstackrelawd}{}\mathrel{\mathop{#2}\limits^{#1}}}

\def\E{{\mathbb E}}

\newcommand{\KL}{\text{KL}}

\newcommand{\ESLB}{\text{ESLB}}

\DeclareMathSymbol{\mhyphen}{\mathord}{AMSa}{"39}

\title{Variational Entropy Search for Adjusting Expected Improvement}


\author{
Nuojin Cheng$^{1}$\quad Stephen Becker$^{1}$ \\
$^1$Department of Applied Mathematics, University of Colorado, Boulder\\
\texttt{Nuojin.Cheng,Stephen.Becker@colorado.edu}
}



\hypersetup{
pdftitle={A template for the arxiv style},
pdfsubject={q-bio.NC, q-bio.QM},
pdfauthor={David S.~Hippocampus, Elias D.~Striatum},
pdfkeywords={First keyword, Second keyword, More},
}

\begin{document}
\maketitle

\begin{abstract}
Bayesian optimization is a widely used technique for optimizing black-box functions, with Expected Improvement (EI) being the most commonly utilized acquisition function in this domain. While EI is often viewed as distinct from other information-theoretic acquisition functions, such as entropy search (ES) and max-value entropy search (MES), our work reveals that EI can be considered a special case of MES when approached through variational inference (VI). In this context, we have developed the Variational Entropy Search (VES) methodology and the VES-Gamma algorithm, which adapts EI by incorporating principles from information-theoretic concepts. The efficacy of VES-Gamma is demonstrated across a variety of test functions and read datasets, highlighting its theoretical and practical utilities in Bayesian optimization scenarios.
\end{abstract}

\begin{center}
    \textcolor{red}{This is a preliminary technical report. For a more comprehensive study, please refer to \href{https://arxiv.org/abs/2501.18756}{https://arxiv.org/abs/2501.18756}.}
\end{center}

\section{Introduction}
\label{sec:intro}
Bayesian methods have consistently been at the forefront of probabilistic modeling, recognized for their effectiveness in managing uncertainty and their adaptability in integrating prior knowledge. In this realm, Bayesian optimization stands out as a key technique for optimizing black-box functions, denoted as $f:\mathbb{X}\to\mathbb{R}$. This approach is particularly valuable in situations where data is scarce or expensive to obtain. The core of Bayesian optimization lies in determining the optimal approach for selecting subsequent sampling points, a process typically guided by an acquisition function $\alpha:\mathbb{X}\to\mathbb{R}$. Among the various acquisition functions, Expected Improvement (EI) \cite{mockus1998application}, Upper Confidence Bound (UCB) \cite{srinivas2010gaussian}, and Knowledge Gradient (KG) \cite{frazier2008knowledge} are commonly used. EI, in particular, is favored for its clear formulation, ease of computation, and effective performance. However, EI tends to prioritize exploitation over exploration \cite{de2021greed}, and it is not always clear when it is the best choice to use.

Entropy search, initially proposed in \cite{hennig2012entropy}, introduces a paradigm shift in this context of acquisition functions. In optimization tasks aimed at maximization, this method focuses on reducing the entropy in the posterior distribution of the function's maximum. The entropy-based criterion evaluates the informational value of potential new observations, directing the optimization process towards areas where the greatest reduction in uncertainty can be achieved. Recent advancements in information-theoretic Bayesian optimization have led to growing attention to the family of entropy search algorithms,  such as predictive entropy search (PES)\cite{hernandez2014predictive}, max-value entropy search (MES)\cite{wang2017max}, and joint entropy search (JES)\cite{tu2022joint,hvarfner2022joint}. These methods aim to select the next sampling point $\bm x$ by minimizing the differential entropy of the optimal target position or value. Despite the effectiveness of entropy search, its practical application is hindered by the absence of a closed-form solution; an example is presented in Figure~\ref{fig:x_y_star}. Approximating mutual information using methods like MCMC and expectation propagation increases computational demands, reducing its practicality.

\begin{figure}[!ht]
	\centering
	\includegraphics[width = 1.0\textwidth]{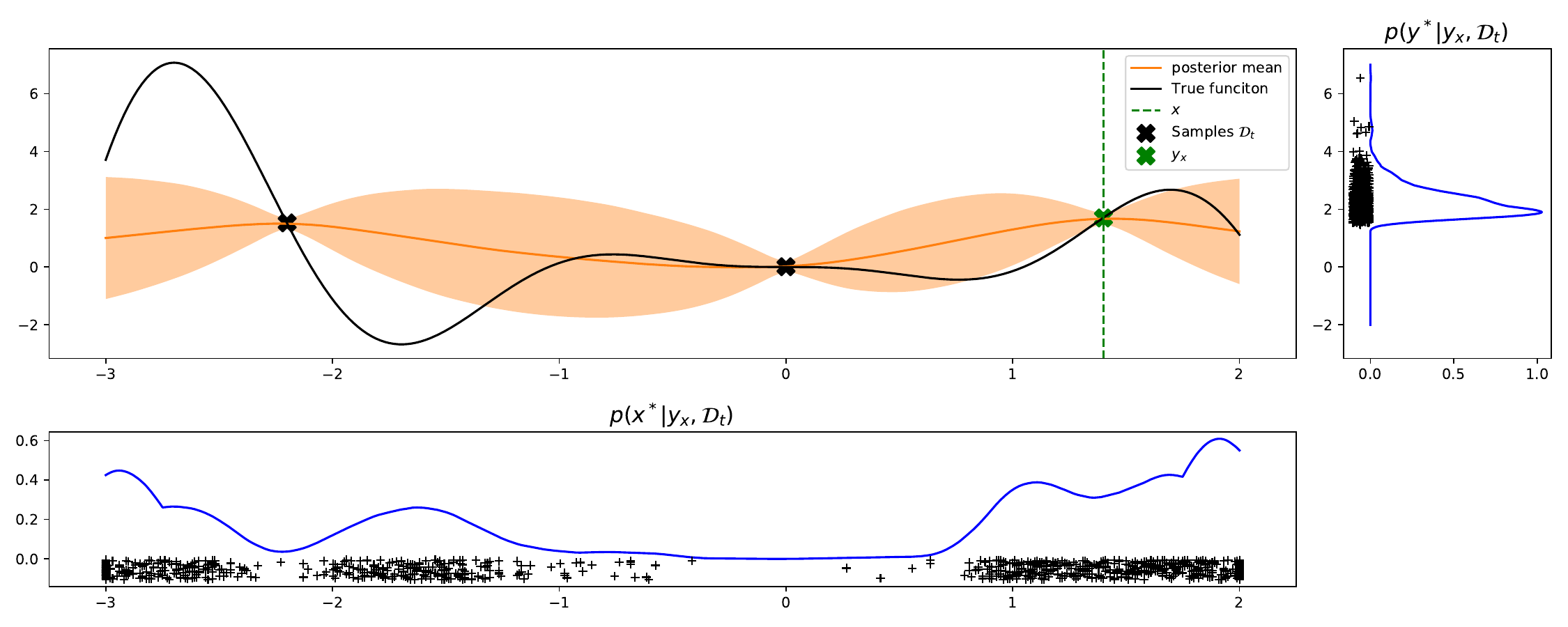}
	\caption{The distribution of $\bm x^*$ and $y^*$ approximated by kernel density estimation conditioned on given data points $\Dc_t$ and one potential evaluation $y_{\bm x}$ at $\bm x = 1.4$. The black crosses are sampled points from their corresponding distributions, which are drawn by sampling the posterior path $p(f\vert y_{\bm x}, \Dc_t)$. Entropy search methods are hindered by the lack of explicit formulation of $p(\bm x^*\vert y_{\bm x}, \Dc_t)$ or $p(y^*\vert y_{\bm x}, \Dc_t)$.}
	\label{fig:x_y_star}
\end{figure}

In \cite{hennig2022probabilistic}, acquisition functions are categorized into three types: value loss (VL), location-information loss (LIL), and value-information loss (VIL). Traditional acquisition functions like EI and KG fall under the VL category, which seeks to directly discover the highest possible value of the function under the Gaussian process prior. The VL functions typically involve calculating the expected maximum value of a function, parameterized by an input $\bm{x}$, and encourage selecting sampling points that are likely to yield the highest value. In contrast, LIL and VIL are similar in their goal to reduce the entropy of specific aspects of the function's maximum. The key difference between LIL and VIL is their focus: LIL aims to reduce the entropy of the location of the maximum point (e.g., ES \cite{hennig2012entropy} and PES \cite{hernandez2014predictive}), while VIL focuses on the value of the maximum point (e.g., MES \cite{wang2017max}). Our work suggests that the differences between these categories, especially between VL and VIL, may be less significant than previously thought. We show that EI can be viewed as a special case of MES, as outlined in Theorem~\ref{thm:ves-exp-ei}. This study explores the relationship between EI and MES and introduces a new method, termed Variational Entropy Search (VES), which combines elements from both EI and MES.

The primary principle of VES involves using variational inference (VI). VI is a method for approximating probability densities, particularly relevant in scenarios where direct computation of these densities is challenging due to their complexity. VI typically introduces a simpler, parameterized distribution, known as the variational distribution, to approximate the complex target distribution. This variational distribution is often chosen from a group of more manageable distributions, such as Gaussian, exponential, or Gamma distributions. In this study, we introduce variational distributions as alternatives to the sampling-based strategies that are typically used in other ES methods for approximating the ES acquisition function. The application of VI helps us view EI as a specific case of the MES approach. We then extend this concept to a broader range of variational distributions to improve the approximation and potentially address the issue of EI's tendency towards over-exploitation.

Our research makes two main contributions:
\begin{enumerate}
    \item We demonstrate that the relationship between EI and MES is stronger than previously recognized. By restricting variational distributions to all exponential distributions, we establish a direct link between EI and MES; see Theorem~\ref{thm:ves-exp-ei}.
    \item We expand the range of variational distributions from exponential to gamma distributions, leading to the development of a new acquisition function (VES-Gamma in Algorithm~\ref{alg:VES-gamma}). This function shows improved performance in Section~\ref{sec:experiment}.
\end{enumerate}

Our work is organized as follow. In Section~\ref{sec:bg}, we introduce two fundamental concepts for Bayesian optimization. Section~\ref{sec:ves} provides the motivations and details of the VES method, as well as the proposed VES-Gamma algorithm. Empirical experiments of VES-Gamma is compared with EI and MES on both test functions and real datasets in Section~\ref{sec:experiment}. We will discuss the conclusion in Section~\ref{sec:conclusion}. This study focuses solely on single-fidelity, scalar-output, maximization-driven, and noise-free Bayesian optimization.
\section{Background}
\label{sec:bg}
The goal of Bayesian optimization is to find the maximum of a black-box function $f:\mathbb{X}\to\mathbb{R}$, where $\mathbb{X}\subset\mathbb{R}^d$ is a defined compact subset. At each iteration $t$, our dataset $\Dc_t$ consists of pairs $\{(\bm x_i, y_{\bm x_i})\}_{i=1}^t$. Based on this dataset, we choose a new point $\bm x_{t+1}\in\mathbb{X}$ to evaluate, where $y_{\bm x_{t+1}}=f(\bm x_{t+1})$. We assume that these evaluations are without noise. To model the black-box function $f$, we typically use a Gaussian process regression model. Gaussian processes are briefly introduced in Section~\ref{ssec:gp}, and in Section~\ref{ssec:af}, we discuss acquisition functions, which help us decide the next point $\bm x_{t+1}$.

\subsection{Gaussian Process}
\label{ssec:gp}

A Gaussian process is a stochastic process used to model an unknown function. It is characterized by the property that any finite set of function evaluations follows a multivariate Gaussian distribution. A Gaussian process is uniquely determined by the current observations $\Dc_t=\{(\bm x_i, y_{\bm x_i})\}_{i=1}^t$ and the kernel function $\kappa(\bm x, \bm x')$. Given these, at time $t$, the predicted mean at a new point $\bm x$ is
\begin{equation}
    \mu_t(\bm x) = \bm\kappa_t(\bm x)^T (\bm K_t)^{-1} \bm y_t,
\end{equation}
and the predicted covariance between two points $\bm x$ and $\bm x'$ is
\begin{equation}
    \textup{Cov}_t(\bm x, \bm x') = \kappa(\bm x, \bm x') - \bm\kappa_t(\bm x)^T (\bm K_t)^{-1} \bm\kappa_t(\bm x'),
\end{equation}
where $[\bm\kappa_t(\bm x)]_i = \kappa(\bm x_i, \bm x)$, $[\bm y_t]_i = y_{\bm x_i}$, and $[\bm K_t]_{i, j} = \kappa(\bm x_i, \bm x_j)$; for more details, see \cite{rasmussen2006gaussian}. 

\subsection{Acquisition Functions}
\label{ssec:af}

While Gaussian processes provide a surrogate model for the unknown function $f$, choosing the next point $\bm x_{t+1}$ for evaluation is crucial. This choice is typically made using an acquisition function $\alpha:\mathbb{X}\to\mathbb{R}$, where we set $\bm x_{t+1} = \arg\max_{\bm x} \alpha(\bm x)$. We discuss two types of acquisition functions relevant to our study.

\paragraph{Expected Improvement}
As the most commonly used acquisition function, EI is formulated as 
\begin{equation}
\label{eqn:ei}
\begin{aligned}
    \alpha_\textup{EI}(\bm x) &= \E_{p(y_{\bm x}\vert\Dc_t)}[\max\{y_{\bm x} - y^*_t, 0\}]\\
    &= \E_{p(y_{\bm x}\vert\Dc_t)}[\max\{y_{\bm x}, y^*_t\}] - y^*_t\\
    &\equiv \E_{p(y_{\bm x}\vert\Dc_t)}[\max\{y_{\bm x}, y^*_t\}],
\end{aligned}
\end{equation}
where $y^*_t$ is the maximum function value in $\Dc_t$, and $\E_{p(\cdot)}$ denotes the expectation with respect to the density $p(\cdot)$. The last line's equivalence holds when we fix $\Dc_t$, meaning maximizing this formulation is equivalent to maximizing the EI function. Essentially, EI is one way to encourage selecting $\bm x$ such that its GP evaluation is higher than the current best observation.

\paragraph{Entropy Search}
ES is a family of acquisition functions designed to choose $\bm x$ whose evaluation reduces the uncertainty about the extreme points of the function. These extreme points can include the maximum position $\bm x^*$, the maximum function value $y^*$, or their joint distribution $(\bm x^*, y^*)$. The main idea behind ES is that the goal of optimization is to reduce uncertainty about these extreme points. Our focus is on MES, which aims to minimize the uncertainty about the GP's maximum value $y^*$. 
In informal notation, defining entropy $\mathbb{H}[y]=\mathbb{E}_{p(y)}[-\log p(y)]$ and conditional entropy as $\mathbb{H}[y | x ]=\mathbb{H}[x,y] - \mathbb{H}[x]$, then MES is 
\begin{equation}
\label{eqn:mes}
\begin{aligned}
    \alpha_\textup{MES}(\bm x) &= \mathbb{H}[y^*|\mathcal{D}_t] - \mathbb{E}_{p(y_{\bm x}\vert\Dc_t)}\left[\mathbb{H}[ y^*|\mathcal{D}_t,y_{\bm x}]\right]\\
    &\equiv \mathbb{E}_{p(y^*, y_{\bm x}\vert\Dc_t)}[\log(p(y^*\vert \Dc_t, y_{\bm x}))].
\end{aligned}
\end{equation}
This equivalence is valid because $\mathbb{H}[y^*\vert\Dc_t]$ is constant with respect to $\bm x$, and omitting this term does not affect the optimization of the acquisition function.

Comparing Equations~\eqref{eqn:ei} and \eqref{eqn:mes}, we note two main differences between these acquisition functions:
\begin{enumerate}
    \item EI calculates an expectation over $p(y_{\bm x}\vert\Dc_t)$, while MES involves an expectation over $p(y^*, y_{\bm x}\vert\Dc_t)$, a more complex density function that is harder to sample from;
    \item EI has a clear formulation (since $p(y_{\bm x}\vert\Dc_t)$ is known), whereas MES does not.
\end{enumerate}
These differences make EI more practical, leading to its widespread use in Bayesian optimization tasks. We aim to solve the second point, which involves the VI method.

\section{Variational Entropy Search}
\label{sec:ves}
In this section, we establish the Variational Entropy Search (VES) approach and show that EI is a specific instance of MES within this framework. To address the implicit nature of $p(y^*, y_{\bm x}\vert\Dc_t)$ in Equation~\eqref{eqn:mes}, the original MES work \cite{wang2017max} utilizes the symmetry of mutual information, as first proposed by \cite{hernandez2014predictive}, and transforms the MES acquisition function in Equation~\eqref{eqn:mes} into
\begin{equation}
\label{eqn:pmes}
\begin{aligned}
\alpha_\textup{MES}(\bm x) 
&= \mathbb{H}[y^*|\mathcal{D}_t] - \mathbb{E}_{p(y_{\bm x}\vert\Dc_t)}\left[\mathbb{H}[ y^*|\mathcal{D}_t,y_{\bm x}]\right]\\
&= \mathbb{I}[y^*; y_{\bm x}\vert\Dc_t]\\
&= \underbrace{\mathbb{H}[y_{\bm x}|\mathcal{D}_t]}_\textup{closed-form} - \mathbb{E}_{p(y^*\vert\Dc_t)}\underbrace{\left[\mathbb{H}[ y_{\bm x}|\mathcal{D}_t,y^*]\right]}_\textup{non-closed-form},
\end{aligned}
\end{equation}
where $\mathbb{I}[\,\cdot\,;\,\cdot\,]$ denotes mutual information. This transformation makes the acquisition function $\alpha_\textup{MES}$ more manageable. However, the density $p(y_{\bm x}|\mathcal{D}_t,y^*)$ remains unknown. The authors suggest approximating it as a truncated Gaussian, upper bounded by $y^*$, which simplifies the calculation but lacks a clearly formulated approximation error.

Alternatively, we propose approximating the unknown distribution $p(y^*|\mathcal{D}_t,y_{\bm x})$ in Equation~\eqref{eqn:mes} using the Variational Inference (VI) approach with a predefined family of densities. This approach trades accuracy for a closed-form expression and provides a formulation for the error induced by this approximation. Our result is presented in the following theorem:
\begin{theorem}
\label{thm:eslb}
The MES acquisition function in Equation~\eqref{eqn:mes} adheres to the Barber-Agakov (BA) bound as proposed in \cite{agakov2004algorithm} and can be lower bounded as follows,
\begin{equation}
\label{eqn:eslb}
\begin{aligned}
\alpha_\textup{MES}(\bm x) &= \mathbb{H}[y^*\vert\Dc_t] - \E_{p(y_{\bm x}\vert\Dc_t)}\mathbb{H}[y^*\vert \Dc_t, y_{\bm x}]\\
&= \mathbb{H}[y^*\vert\Dc_t] + \mathbb{E}_{p(y^*,y_{\bm x}\vert\Dc_t)}[\log(q(y^*\vert \Dc_t, y_{\bm x}))] + \mathbb{E}_{p(y_{\bm x}\vert\Dc_t)}[\KL\big(p(y^*\vert \Dc_t, y_{\bm x})\|q(y^*\vert \Dc_t, y_{\bm x})\big)]\\
&\geq \mathbb{H}[y^*\vert\Dc_t] + \mathbb{E}_{p(y^*,y_{\bm x}\vert\Dc_t)}[\log(q(y^*\vert\Dc_t,y_{\bm x}))],
\end{aligned}
\end{equation}
where $q(y^*\vert\Dc_t,y_{\bm x})$ is any chosen density function and $\KL(\cdot\|\cdot)$ represents the Kullback-Leibler divergence. The inequality is tight if and only if $\mathbb{E}_{p(y_{\bm x}\vert\Dc_t)}[\KL\big(p(y^*\vert \Dc_t, y_{\bm x})\|q(y^*\vert \Dc_t, y_{\bm x})\big)] = 0$.
\end{theorem}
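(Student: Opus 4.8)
The plan is to establish the exact middle equality of \eqref{eqn:eslb} first; once that identity is in hand, both the inequality and its tightness characterization fall out immediately from the non-negativity of the Kullback--Leibler divergence. The whole argument is an instance of the Barber--Agakov variational bound specialized to the conditioning structure here, so no deep machinery is needed.

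First I would unfold the conditional-entropy term in the definition. Writing $\mathbb{H}[y^*\vert\Dc_t,y_{\bm x}] = -\E_{p(y^*\vert\Dc_t,y_{\bm x})}[\log p(y^*\vert\Dc_t,y_{\bm x})]$ and taking the outer expectation over $p(y_{\bm x}\vert\Dc_t)$, the chain rule $p(y^*,y_{\bm x}\vert\Dc_t)=p(y^*\vert\Dc_t,y_{\bm x})\,p(y_{\bm x}\vert\Dc_t)$ collapses the nested expectation into a single expectation over the joint density:
\begin{equation}
-\E_{p(y_{\bm x}\vert\Dc_t)}\mathbb{H}[y^*\vert\Dc_t,y_{\bm x}] = \E_{p(y^*,y_{\bm x}\vert\Dc_t)}[\log p(y^*\vert\Dc_t,y_{\bm x})].
\end{equation}

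Next I would introduce the variational density $q$ by the standard add-and-subtract trick, rewriting $\log p = \log q + \log(p/q)$ inside this expectation. The $\log q$ contribution is exactly the middle term of \eqref{eqn:eslb}, while the residual $\E_{p(y^*,y_{\bm x}\vert\Dc_t)}[\log(p/q)]$, after again factoring the joint density as $\E_{p(y_{\bm x}\vert\Dc_t)}\E_{p(y^*\vert\Dc_t,y_{\bm x})}[\,\cdot\,]$, is precisely $\E_{p(y_{\bm x}\vert\Dc_t)}[\KL(p(y^*\vert\Dc_t,y_{\bm x})\|q(y^*\vert\Dc_t,y_{\bm x}))]$. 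Restoring the $\bm x$-independent prior entropy $\mathbb{H}[y^*\vert\Dc_t]$ then gives the exact decomposition. The inequality follows since each inner KL term is non-negative by Gibbs' inequality, so its expectation over $p(y_{\bm x}\vert\Dc_t)$ is non-negative and may simply be dropped; as the dropped quantity is exactly the gap, the bound is tight if and only if $\E_{p(y_{\bm x}\vert\Dc_t)}[\KL(p\|q)]=0$, as claimed.

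There is no serious analytic obstacle; the only point demanding care is the measure-theoretic bookkeeping. One must assume $q(y^*\vert\Dc_t,y_{\bm x})$ is absolutely continuous with respect to $p(y^*\vert\Dc_t,y_{\bm x})$ so that the KL divergence is well defined and finite and the add-and-subtract step is legitimate. I would also remark that, since the expectation is of a non-negative integrand, the gap vanishes precisely when the inner KL is zero for $p(y_{\bm x}\vert\Dc_t)$-almost every $y_{\bm x}$, i.e. when $q$ matches the true posterior $p(y^*\vert\Dc_t,y_{\bm x})$ almost everywhere, which both justifies and sharpens the stated tightness condition.
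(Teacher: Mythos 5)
Your proof is correct and takes essentially the same route as the paper's: collapse the nested expectation over $p(y_{\bm x}\vert\Dc_t)$ and $p(y^*\vert\Dc_t,y_{\bm x})$ into one expectation over the joint via the chain rule, insert $\log q$ by add-and-subtract, identify the residual as $\mathbb{E}_{p(y_{\bm x}\vert\Dc_t)}[\KL\big(p(y^*\vert\Dc_t,y_{\bm x})\|q(y^*\vert\Dc_t,y_{\bm x})\big)]$, and drop it by non-negativity, with tightness exactly when that expected KL vanishes. One small correction to your auxiliary remark: for $\KL(p\|q)$ to be finite one needs $p(y^*\vert\Dc_t,y_{\bm x})$ absolutely continuous with respect to $q(y^*\vert\Dc_t,y_{\bm x})$, not the reverse as you stated.
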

The proof of Theorem~\ref{thm:eslb} is provided in Appendix~\ref{apdx:ba-lb}. As indicated in Equation~\eqref{eqn:eslb}, the error we aim to minimize is $\mathbb{E}_{p(y_{\bm x}\vert\Dc_t)}[\KL\big(p(y^*\vert \Dc_t, y_{\bm x})\|q(y^*\vert \Dc_t, y_{\bm x})\big)]$, which depends on $q$ and $\bm x$. When $\bm x$ is fixed, minimizing this error with respect to $q$ is equivalent to maximizing the remaining terms. Given that $\mathbb{H}[y^*\vert\Dc_t]$ is independent of both $q$ and $\bm x$, we omit it and define the remaining term as the Entropy Search Lower Bound (ESLB)
\begin{equation}
\label{eq:eslb}
    \ESLB(q,\bm x) \coloneqq \mathbb{E}_{p(y^*,y_{\bm x}\vert \Dc_t)}[\log(q(y^*\vert \Dc_t, y_{\bm x}))].
\end{equation}
Thus, maximizing $\alpha_\textup{MES}(\bm x)$ and minimizing the KL term can be combined into one objective: maximizing $\ESLB(q, \bm x)$. A typical approach in VI is to restrict the density $q$ to a set of potential options $\mathcal{Q}$. By parameterizing $q$ within $\mathcal{Q}$, the problem becomes tractable by solving $q$ and $\bm x$ iteratively. This process is detailed in Algorithm~\ref{alg:VES}.

\begin{algorithm}[ht]
		\DontPrintSemicolon
		\caption{Variational Entropy Search (VES) Framework\label{alg:VES}}
		\KwIn{Sample set $\mathcal{D}_t$, variational family $\mathcal{Q}$}
		\KwOut{acquisition maximizer $\bm x$}
		\begin{algorithmic}[1]
                \STATE initialize $\bm x^{(0)}$ 
			\FOR{$n = 1:N$}{
			    \STATE update variational posterior $q^{(n)}(y^*) \leftarrow \arg\max_{q\in\mathcal{Q}} \ESLB(q,\bm x^{(n-1)})$;
			    \STATE update $\bm x^{(n)} \leftarrow \arg\max_{\bm x} \ESLB(q^{(n)},\bm x)$;
			}
			\ENDFOR
			\STATE return $\bm x^{(N)}$
		\end{algorithmic}
\end{algorithm}
\paragraph{Connection to ELBO}
The concept of the ESLB in our approach bears similarities to the Evidence Lower Bound (ELBO), a concept frequently utilized in the machine learning community \cite{paisley2012variational, hoffman2013stochastic, kingma2013auto}. ELBO is employed in VI to approximately maximize the log-likelihood $\log p(\tilde{\bm x})$ with a latent variable $\bm z$. The log-likelihood is expressed as
\begin{equation}
\label{eqn:elbo}
\begin{aligned}
    \log p(\tilde{\bm x}) &= \KL(q(\bm z)\|p(\bm z\vert \tilde{\bm x}))+\E_{q(z)}\left[\log\left(\frac{p(\tilde{\bm x}\vert \bm z)p(\bm z)}{q(\bm z)}\right)\right]\\
         &\geq \E_{q(\bm z)}\left[\log\left(\frac{p(\tilde{\bm x}\vert \bm z)p(\bm z)}{q(\bm z)}\right)\right]=:\text{ELBO}(q(\bm z), p(\tilde{\bm x}\vert\bm z)),
\end{aligned}
\end{equation}
where $p(\bm z)$ is the prior distribution. In VI, the goal is to maximize $\log p(\tilde{\bm x})$ while minimizing the error term $\KL(q(\bm z)\|p(\bm z\vert \tilde{\bm x}))$. This is achieved by maximizing the ELBO as shown in Equation~\eqref{eqn:elbo}. The typical method involves iteratively optimizing the conditional probability $p(\tilde{\bm x}\vert \bm z)$ and the variational approximation $q(\bm z)$. Our ESLB approach follows a similar strategy. The shared properties between ELBO and our ESLB method are outlined in Table~\ref{tab:elbo-eslb}.

\begin{table}[ht]
\centering
\caption{Comparison of key aspects between the ELBO and ESLB approaches. \label{tab:elbo-eslb}}
\begin{tabular}{lcc}
\toprule
\textbf{Aspect} & \textbf{ELBO Approach} & \textbf{ESLB Approach} \\ 
\midrule
Primary Variable & $p(\tilde{\bm x}\vert\bm z)$ & $\bm x$ \\ 
Variational Variable & $q(\bm z)$ & $q(y^*\vert y_{\bm x}, \Dc_t)$ \\ 
Lower Bound Formulation & ELBO$\left(q(\bm z), p(\tilde{\bm x}\vert\bm z)\right)$ & ESLB$(q, \bm x)$ \\ 
\bottomrule
\end{tabular}
\end{table}

\paragraph{Connection to EI}
To establish a connection between the VES approach and the EI acquisition function, we define $\mathcal{Q}$ as the set of all exponential distributions, $\mathcal{Q}_\textup{exp}$, with the support lower bounded by $\max\{y_{\bm x},y^*_t\}$, where $y^*_t$ denotes the maximum function evaluation from $\Dc_t$. The variational density function $q(y^*\vert y_{\bm x}, \Dc_t)$, parameterized by $\lambda > 0$, is then expressed as
\begin{align}
\label{eq:eslb-exp}
    q(y^*\vert y_{\bm x}, \Dc_t) = \lambda \exp (-\lambda(y^* - \max\{y_{\bm x}, y^*_t\}))\mathbbm{1}_{y^*\geq\max\{y_{\bm x},y^*_t\}}.
\end{align}
In situations where there is no noise in the observations, the indicator function $\mathbbm{1}_{y^*\geq\max\{y_{\bm x},y^*_t\}}$ is always equal to one and can be omitted. We present the following theorem to describe the relationship between the ESLB with an exponential distribution and the EI acquisition function.

\begin{theorem}
\label{thm:ves-exp-ei}
    When the family $\mathcal{Q}_\textup{exp}$ is selected as in Equation~\eqref{eq:eslb-exp} and function evaluation is noise-free, the output of Algorithm~\ref{alg:VES} is equivalent to the output of the EI method.
\end{theorem}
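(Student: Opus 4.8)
The plan is to substitute the exponential variational density of Equation~\eqref{eq:eslb-exp} directly into the ESLB objective of Equation~\eqref{eq:eslb} and show that (i) the $\bm x$-maximization coincides with maximizing $\alpha_\textup{EI}$ independently of the variational parameter $\lambda$, and (ii) the $\lambda$-maximization never alters this conclusion. First I would take the logarithm of the density in Equation~\eqref{eq:eslb-exp}, noting that in the noise-free regime the indicator is almost surely one: since $y^*$ is the global maximum of the same posterior sample whose value at $\bm x$ is $y_{\bm x}$, we have $y^* \geq y_{\bm x}$, and $y^* \geq y^*_t$ because $y^*_t$ is a maximum over a finite subset of evaluations. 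Hence $\log q(y^*\vert y_{\bm x}, \Dc_t) = \log \lambda - \lambda\big(y^* - \max\{y_{\bm x}, y^*_t\}\big)$ is well defined throughout.

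Next I would insert this expression into $\ESLB(q, \bm x)$ and split the expectation by linearity. The term $\mathbb{E}_{p(y^*,y_{\bm x}\vert\Dc_t)}[y^*]$ marginalizes to $\mathbb{E}_{p(y^*\vert\Dc_t)}[y^*]$, a constant independent of both $\bm x$ and $\lambda$; call it $C$. The term $\mathbb{E}_{p(y^*,y_{\bm x}\vert\Dc_t)}[\max\{y_{\bm x},y^*_t\}]$ marginalizes to $\mathbb{E}_{p(y_{\bm x}\vert\Dc_t)}[\max\{y_{\bm x},y^*_t\}]$, which by Equation~\eqref{eqn:ei} equals $\alpha_\textup{EI}(\bm x) + y^*_t$. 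This yields the clean closed form
\[
\ESLB(q, \bm x) = \log\lambda - \lambda\big(C - y^*_t - \alpha_\textup{EI}(\bm x)\big).
\]

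With this in hand the two updates of Algorithm~\ref{alg:VES} become transparent. For the $\bm x$-update (line 4), since $\lambda > 0$ and $\log\lambda - \lambda(C - y^*_t)$ is constant in $\bm x$, maximizing $\ESLB(q^{(n)}, \bm x)$ is equivalent to maximizing $\alpha_\textup{EI}(\bm x)$; crucially this holds for every $\lambda > 0$, so the iterate equals $\arg\max_{\bm x}\alpha_\textup{EI}(\bm x)$ regardless of the value $\lambda^{(n)}$ produced by line 3. For completeness I would also solve line 3 explicitly: differentiating in $\lambda$ gives the interior maximizer $\lambda^{(n)} = 1/\big(C - y^*_t - \alpha_\textup{EI}(\bm x^{(n-1)})\big)$, which is positive because $y^*$ strictly exceeds $\max\{y_{\bm x}, y^*_t\}$ with positive probability. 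Since every $\bm x$-update returns the EI maximizer, we get $\bm x^{(n)} = \arg\max_{\bm x}\alpha_\textup{EI}(\bm x)$ for all $n \geq 1$, and in particular the returned $\bm x^{(N)}$ is exactly the EI point.

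The main obstacle is not the algebra but justifying the marginalization cleanly: I must argue that $\mathbb{E}[y^*]$ is genuinely constant with respect to the candidate $\bm x$, being a functional of the posterior over $f$ on all of $\mathbb{X}$ rather than of the evaluation location, and that it is finite, which holds for a GP with continuous sample paths on the compact domain $\mathbb{X}$. I would also verify the support and positivity conditions guaranteeing $\lambda^{(n)} > 0$, so that the optimizer stays inside the admissible family $\mathcal{Q}_\textup{exp}$ and the logarithm remains finite; these are precisely the noise-free hypotheses invoked in the theorem, and the decoupling they produce is what makes the $\bm x$-update insensitive to $\lambda$ and hence identical to EI.
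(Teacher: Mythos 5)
Your proposal is correct and follows essentially the same route as the paper's own proof: substitute the exponential density into the ESLB, split the expectation so that $\mathbb{E}_{p(y^*\vert\Dc_t)}[y^*]$ appears as a constant and $\mathbb{E}_{p(y_{\bm x}\vert\Dc_t)}[\max\{y_{\bm x},y^*_t\}]$ appears as the EI term, observe that the $\bm x$-update is therefore independent of $\lambda>0$, and solve the $\lambda$-update in closed form via the first-order condition. Your version is in fact slightly more careful than the paper's, since you also justify the indicator being almost surely one, the finiteness of $\mathbb{E}[y^*]$, and the positivity of the optimal $\lambda$, all of which the paper leaves implicit.
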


\begin{proof}
    By restricting the variational distributions to exponential distributions, we slightly abuse the input notations of $\ESLB$ in Equation~\eqref{eq:eslb} and define:
    \begin{equation}
    \label{eqn:eslb-exp}
    \begin{aligned}
        \ESLB(\lambda, \bm x) &= \mathbb{E}_{p(y^*,y_{\bm x}\vert \Dc_t)}\left[\log\left(\lambda\exp\left(-\lambda (y^* - \max\{y_{\bm x},\Dc_t\})\right)\right)\right]\\
        &= \log\lambda -\lambda \mathbb{E}_{p(y^*,y_{\bm x}\vert \Dc_t)}\left[\left(y^* - \max\{y_{\bm x},y^*_t\}\right)\right]\\
        &= \log \lambda - \lambda \underbrace{\mathbb{E}_{p(y^*\vert \Dc_t)}[y^*]}_\textup{constant} + \lambda \underbrace{\mathbb{E}_{p(y_{\bm x}\vert \Dc_t)}\left[\max\{y_{\bm x},\Dc_t\}\right]}_\textup{EI acquisition}.
    \end{aligned}
    \end{equation}

Beginning with an arbitrary initial value $\bm x^{(0)}$, we determine the corresponding parameter
\begin{equation}
\label{eq:solve-lambda}
\lambda^{(1)} = \mathbb{E}_{p(y^*,y_{\bm x^{(0)}}\vert \Dc_t)}\left[\left(y^* - \max\{y_{\bm x^{(0)}},y^*_t\}\right)\right]^{-1},
\end{equation}
which is derived by taking the directive of Equation~\eqref{eqn:eslb-exp} and letting it equal to zero. With $\lambda$ fixed, $\ESLB(\lambda^{(1)}, \bm x)$ produces the same result as the EI acquisition function in Equation~\eqref{eqn:ei} 
. We then compute $\lambda^{(2)}$ based on $\bm x^{(1)}$ following Equation~\eqref{eq:solve-lambda}. Regardless of the specific value of $\lambda$, the $\ESLB$ function consistently yields the same result, $\bm x^{(1)}$. This consistency ensures that the VES iteration process converges in a single step. The final outcome, represented as $(\bm x^{(1)}, \lambda^{(2)})$, indicates that the corresponding $q(y^* | y_{\bm x}, \Dc_t)$ is the closest approximation to $p(y^* | y_{\bm x}, \Dc_t)$ within $\mathcal{Q}_\textup{exp}$.
\end{proof}

\begin{figure}[!ht]
	\centering
	\includegraphics[width = 0.6\textwidth]{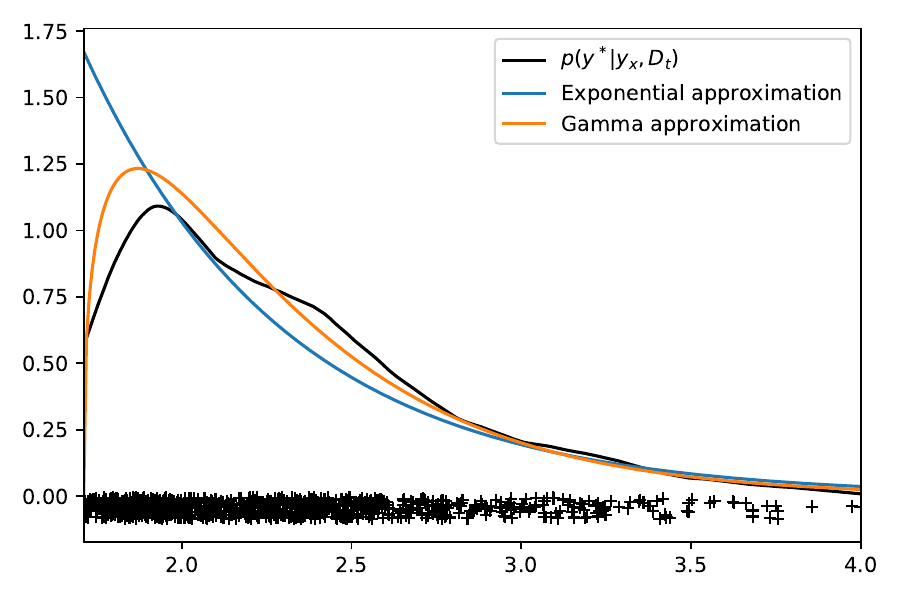}
	\caption{The approximation of $p(y^*\vert y_{\bm x}, \Dc_t)$ (black) in the right side of Figure~\ref{fig:x_y_star} using moment matching within exponential distributions (blue) and Gamma distributions (orange) on the domain $[1.71, 4.00]$.}
	\label{fig:y_star}
\end{figure}

Theorem~\ref{thm:ves-exp-ei} demonstrates that the EI can be viewed as a specific instance of MES when the variational approximation of $p(y^*\vert y_{\bm x}, \Dc_t)$ is limited to exponential distributions. This insight provides EI, a widely used acquisition function, with a new information-theoretical interpretation. However, the scope of variational distributions confined to exponential forms is somewhat narrow. As illustrated in Figure~\ref{fig:y_star}, the density of $p(y^*\vert y_{\bm x},\Dc_t)$ does not align well with the characteristics of an exponential distribution. Notably, the density of $y^* | y_{\bm x}, \Dc_t$ in the figure is not monotonic; it exhibits a peak before declining towards $\max\{y_{\bm x}, y^*_t\}$ ($\approx 1.71$). This pattern exceeds the representational limits of exponential distributions.

Therefore, there is a compelling reason to enrich the set of distributions $\mathcal{Q}$. By expanding this set, we can introduce greater flexibility into our variational approximation, leading to a more effective reduction in the entropy of $y^* | y_{\bm x}, \Dc_t$. A better approximation by introducing the Gamma distribution in shown in Figure~\ref{fig:y_star}. In the subsequent section, we will explore how to modify the VES framework to include the Gamma distribution, thereby extending our approach to a more general case that goes beyond the exponential distribution.

\paragraph{VES with Gamma Distribution}
In the context of the VES framework, when we define the family $\mathcal{Q}$ as the (shifted) Gamma distributions parameterized by $k,\beta>0$ with its support lower bounded by $\max\{y_{\bm x},y^*_t\}$, the variational density transforms into 
\begin{equation}
\label{eq:gamma-posterior}
    q(y^*\vert y_{\bm x}, \Dc_t) = \frac{\beta^k}{\Gamma(k)} \left(y^* - \max\{y_{\bm x},\Dc_t\}\right)^{k - 1} \exp \left(-\beta(y^* - \max\{y_{\bm x}, \Dc_t\})\right)\mathbbm{1}_{y^*\geq\max\{y_{\bm x},y^*_t\}},
\end{equation}
where $\Gamma(\cdot)$ denotes the Gamma function.
The noise-free assumption lets the indicator function omitted in Equation~\eqref{eq:gamma-posterior}, and the ESLB is reformulated as
\begin{equation}
\label{eqn:eslb-gamma}
\begin{aligned}
    &\ESLB(k, \beta, \bm x)\\
    &= k\log \beta - \log\Gamma(k) + (k - 1)\mathbb{E}_{p(y^*,y_{\bm x}\vert \Dc_t)}\left[\log\left(y^* - \max\{y_{\bm  x}, y^*_t\}\right)\right] - \beta\mathbb{E}_{p(y^*,y_{\bm x}\vert \Dc_t)}\left[\left(y^* - \max\{y_{\bm x},y^*_t\}\right)\right]\\
    &= k\log \beta - \log\Gamma(k) + (k - 1)\underbrace{\mathbb{E}_{p(y^*,y_{\bm x}\vert \Dc_t)}\left[\log\left(y^* - \max\{y_{\bm  x}, y^*_t\}\right)\right]}_\textup{``anti-EI''} - \beta \underbrace{\E_{p(y^*\vert \Dc_t)}[y^*]}_\textup{constant} + \beta \underbrace{\E_{p(y_{\bm x}\vert\Dc_t)}[\max\{y_{\bm x}, y^*_t\}]}_\textup{EI}.
\end{aligned}
\end{equation}

The ESLB in Equation~\eqref{eqn:eslb-gamma} becomes the main objective in our VES-Gamma Algorithm. We identify two key terms related to the value of $\bm x$: the EI term and the ``anti-EI'' term. The EI term is equivalent to $\alpha_\textup{EI}$ in Equation~\eqref{eqn:ei}. The ``anti-EI'' term, as implied by its name, acts in contrast to the EI term. Maximizing this term with respect to $\bm x$ encourages the selection of $\bm x$ values that result in lower GP evaluations, serving as a regulatory mechanism to balance the potential over-exploitation tendency of EI.

Notably, when the parameter $k$ is set to 1, the Gamma distribution reverts to an exponential distribution, leading to the elimination of the ``anti-EI'' term. As a result, the ESLB in Equation~\eqref{eqn:eslb-gamma} becomes equivalent to the earlier ESLB in Equation~\eqref{eqn:eslb-exp}. The value of $k$ is crucial, indicating the degree of regularization and reflecting the level of reliance the VES-Gamma method places on EI in a specific context. A higher value of $k$ suggests less trust in EI, while a $k$ value less than 1 indicates a strong alignment of VES-Gamma with EI outcomes.

The global maximum of the ESLB with respect to $\beta$ and $k$ exists, as can be demonstrated through derivative analysis, and can be approximated without the need for an optimizer. To determine the optimal $\beta$ and $k$ for a fixed $\bm x$, we derive the following equations:
\begin{align}\label{eq:ves-gamma-k}
    \frac{\partial}{\partial k}\ESLB(k,\beta,\bm x) &= \log\beta - \frac{\partial \log \Gamma(k)}{\partial k} + \mathbb{E}_{p(y^*,y_{\bm x}\vert \Dc_t)}\left[\log\left(y^* - \max\{y_{\bm  x}, y^*_t\}\right)\right] = 0,\\
    \frac{\partial}{\partial \beta}\ESLB(k,\beta,\bm x) &= \frac{k}{\beta} - \mathbb{E}_{p(y^*,y_{\bm x}\vert \Dc_t)}\left[\left(y^* - \max\{y_{\bm x},y^*_t\}\right)\right] = 0.\label{eq:ves-gamma-beta}
\end{align}
The solution for $k$ can be implicitly expressed as 
\begin{equation}
\label{eq:k-solve}
    \log k - \frac{\partial \log \Gamma(k)}{\partial k} = \log \mathbb{E}_{p(y^*,y_{\bm x}\vert \Dc_t)}\left[\left(y^* - \max\{y_{\bm x},y^*_t\}\right)\right] - \mathbb{E}_{p(y^*,y_{\bm x}\vert \Dc_t)}\left[\log\left(y^* - \max\{y_{\bm  x}, y^*_t\}\right)\right]. 
\end{equation}

The term $\partial \log \Gamma(k)/\partial k$, known as the digamma function \cite{abramowitz1988handbook}, can be efficiently approximated as a series. Thus, the root of Equation~\eqref{eq:k-solve} can be effectively estimated using a trivial root finding method. With the root $k^*$ determined, the value of $\beta$ is given by
\begin{equation}
\label{eq:beta-solve}
\beta^* = k^*/\mathbb{E}_{p(y^*,y_{\bm x}\vert \Dc_t)}\left[\left(y^* - \max\{y_{\bm x},y^*_t\}\right)\right].
\end{equation}
The VES-Gamma Algorithm, based on these principles, is detailed in Algorithm~\ref{alg:VES-gamma}. A 1D example of VES compared with EI and MES is shown in Figure~\ref{fig:1d-res}.
\begin{algorithm}[ht]
		\DontPrintSemicolon
		\caption{VES-Gamma\label{alg:VES-gamma}}
		\KwIn{Sample set $\mathcal{D}_t$}
		\KwOut{acquisition function maximizer $\bm x$}
		\begin{algorithmic}[1]
                \STATE initialize $\bm x^{(0)}$ 
			\FOR{$n = 1:N$}{
                    \STATE Evaluate $\mathbb{E}_{p(y^*,y_{\bm x}\vert \Dc_t)}\left[\left(y^* - \max\{y_{\bm x},y^*_t\}\right)\right]$ and $\mathbb{E}_{p(y^*,y_{\bm x}\vert \Dc_t)}\left[\log\left(y^* - \max\{y_{\bm  x}, y^*_t\}\right)\right]$ by sampling $p(y^*,y_{\bm x}\vert \Dc_t)$ given $\bm x = \bm x^{(n-1)}$;
                    \STATE Solve $k^{(n)}$ from Equation~\eqref{eq:k-solve};
                    \STATE Solve $\beta^{(n)}$ from Equation~\eqref{eq:beta-solve};
			    \STATE Update $\bm x^{(n)} \leftarrow \arg\max_{\bm x} \ESLB(k^{(n)}, \beta^{(n)}, \bm x)$ defined in Equation~\eqref{eqn:eslb-gamma}\label{algs};
			}
			\ENDFOR
			\STATE return $\bm x^{(N)}$
		\end{algorithmic}
\end{algorithm}

\begin{figure}[!ht]
	\centering
	\includegraphics[width = 1.0\textwidth]{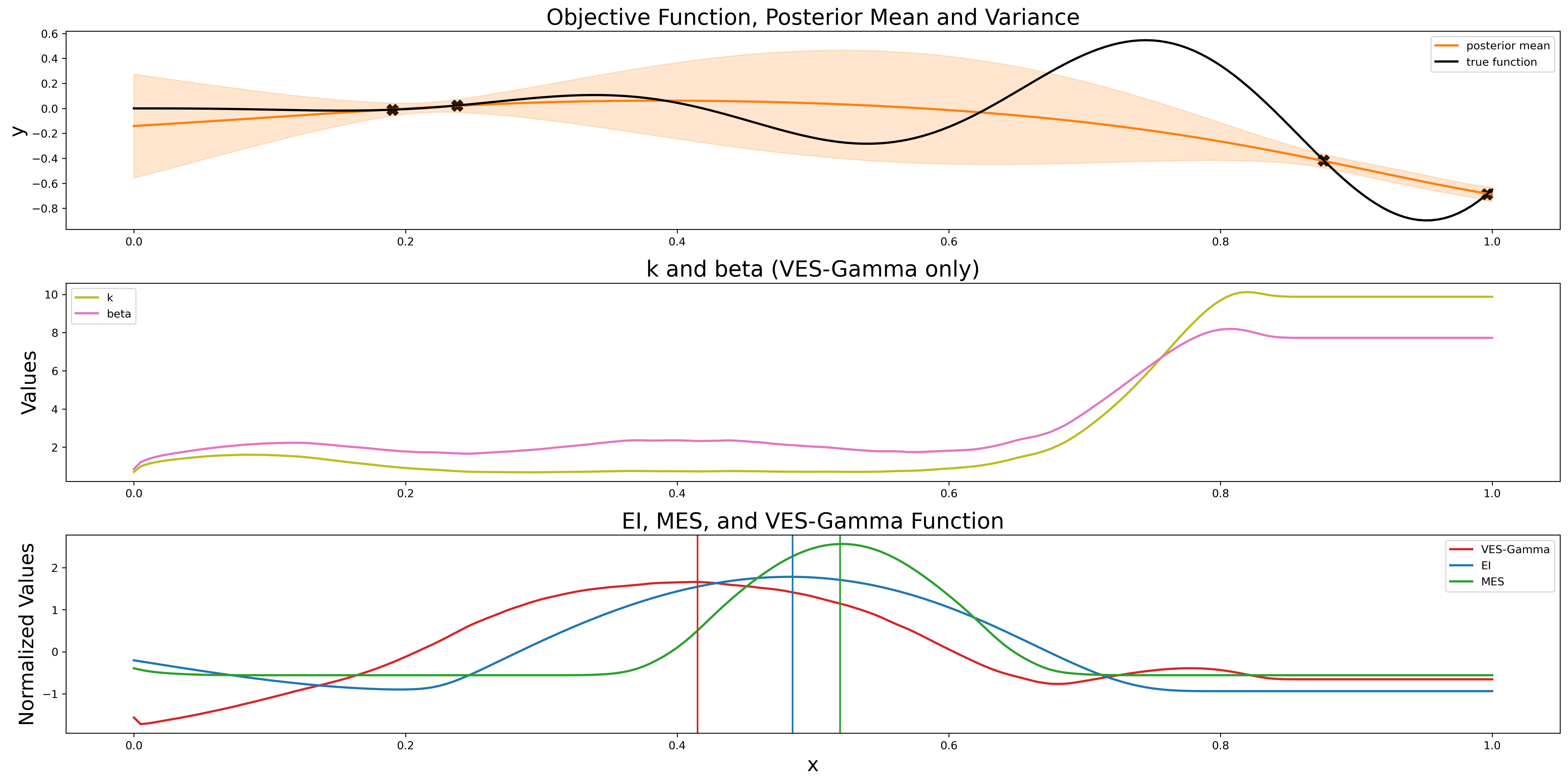}
	\caption{Comparison of EI, MES, and our proposed VES-Gamma (see Algorithm~\ref{alg:VES-gamma}) acquisition functions. The top figure shows the test function (black) and the Gaussian posterior (orange) with current observed points (black crosses). The middle figure displays the values of $k$ (olive) and $\beta$ (pink) calculated in Equation~\eqref{eq:k-solve} and Equation~\eqref{eq:beta-solve}, reflecting the reliance of VES-Gamma on the EI acquisition function. The bottom figure compares the three acquisition functions and their maximizers. The acquisition function values are centered and re-scaled for comparison.}
	\label{fig:1d-res}
\end{figure}

\section{Experiments}
\label{sec:experiment}
In this section, we probe the empirical performance of VES-Gamma on a variety of tasks. We employ Gaussian process priors for the function $f$, utilizing a Matern 5/2 kernel defined as
\begin{equation}
\label{eq:matern52}
    \kappa_\textup{Matern52}(\bm x, \bm x') = \sigma_f^2 \left(1+\frac{\sqrt{5}\lVert \bm x - \bm x'\rVert}{\sigma_l}+\frac{5\lVert \bm x - \bm x'\rVert^2}{3\sigma_l^2}\right)\exp\left(-\frac{\sqrt{5}\lVert \bm x - \bm x'\rVert}{\sigma_l}\right),
\end{equation}
where $\sigma_l$ and $\sigma_f$ are the kernel parameters. For each iteration, we optimize these parameters by applying maximum log-likelihood estimation (MLE) given current observations $\Dc_t$. To evaluate performance, we use the (simple) log-regret metric $r(t)$, defined as
\begin{equation}
    r(t) \coloneqq \log\left(f^* - \max_{(\bm x_i, y_{\bm x_i})\in\Dc_t} y_{\bm x_i}\right),
\end{equation}
where $f^* \coloneqq \max_{\bm x\in \mathbb{X}}f(\bm x)$. In cases where $y_{\bm x_i}$ equals $f^*$, we set the log regret to $-16$, aligning with the logarithm of machine precision of double precision floating point. For comparative purposes, we include implementations of EI and MES using BoTorch \cite{balandat2020botorch}, with the same settings for the Gaussian process prior. We restrict the focus of Bayesian optimization on 2D problems since we use grid search to optimize  Step~\ref{algs} in Algorithm~\ref{alg:VES-gamma} at $101\times 101$ grid points, though we expect that practical implementations in other dimensions would work nearly as well. We utilize 1,024 path samples to approximate the expectations in Equations~\eqref{eq:k-solve} and \eqref{eq:beta-solve}. The evaluations of these functions are conducted without noise, as assumed in Section~\ref{sec:ves}. All the presented results are repeated 10 times to evaluate the average performance and their standard deviations.

\subsection{Test Functions} 

Our investigation began with a comparative analysis of different acquisition functions using three benchmark test functions: the Rosenbrock function in 2D \cite{rosenbrock1960automatic}, the Three Hump Camel function \cite{dixon1978optimization}, and Himmelblau's function \cite{himmelblau2018applied}. The Rosenbrock function is widely recognized for its effectiveness in evaluating optimizers, while the Three Hump Camel and Himmelblau's functions are characterized by their multi-modal nature. The results of these tests are displayed in Figure~\ref{fig:2d-res}, where we initiated the process with two observations and monitored the log regret values over a span of 100 steps.

\begin{figure}[!ht]
	\centering
	\includegraphics[width = 1.0\textwidth]{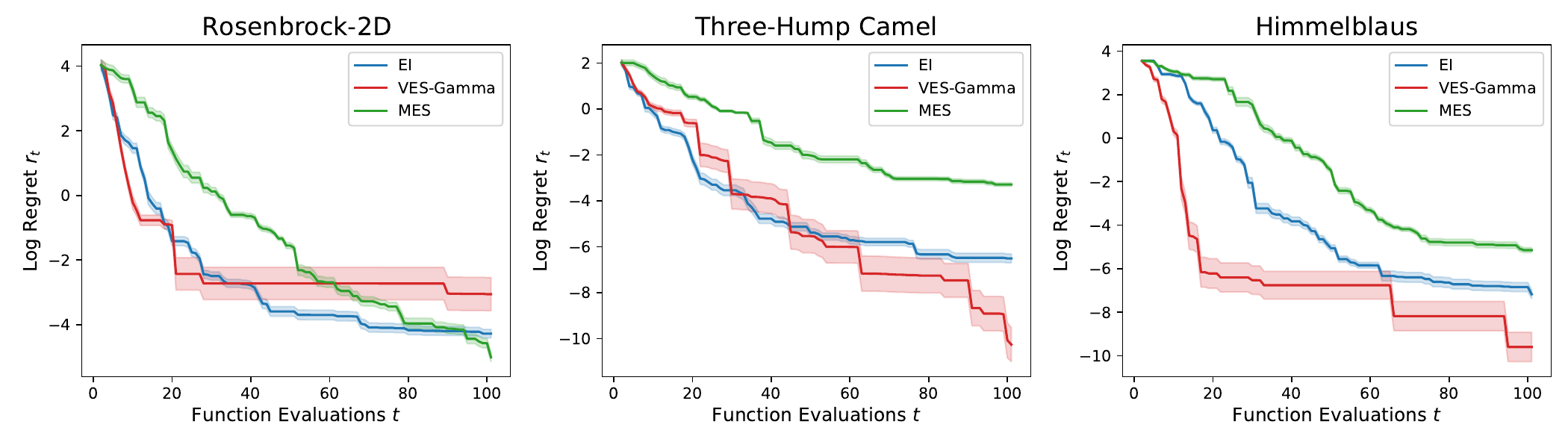}
	\caption{Performance comparison of VES, EI, and MES on three test functions, illustrating the mean and 0.1 standard deviations for log regret.}
	\label{fig:2d-res}
\end{figure}

As indicated in Figure~\ref{fig:2d-res}, the VES-Gamma method exhibits superior performance over the other acquisition functions, particularly on the Three-Hump Camel and Himmelblau's functions. This observation suggests that VES-Gamma may be more adept at handling multi-modal functions, given the multi-modal properties of these two test cases. However, it is also noted that the uncertainty associated with VES-Gamma increases in the latter stages of the experiment, implying that the choice of initial points, with its inherent randomness, might have a more pronounced impact on VES-Gamma compared to EI and MES. Overall, VES-Gamma shows promising potential in outperforming both EI and MES.

\subsection{Hyper-parameter Tuning with Real Datasets}

In addition to our evaluations using test functions, we extended our research to include hyper-parameter tuning on two real-world datasets, each tailored to a distinct type of problem. The first dataset, commonly referred to as the diabetes dataset \cite{efron2004least}, was utilized for a regression problem, whereas the second dataset, the iris dataset \cite{fisher1936use}, was applied to a classification problem. We employed the XGBoost algorithm as the solver. The hyper-parameter tuning process involved adjusting the learning rate within a range from 0 to 1 and the XGB-gamma value between 0 and 5. The XGB-gamma parameter is critical as it represents the minimum loss reduction necessary for additional partitioning on a leaf node of the tree. For the diabetes dataset, the objective function was the negative cross-validation mean squared errors, while for the iris dataset, it was the cross-validation accuracy, both dependent on the chosen values of the learning rate and XGB-gamma. The results of these hyper-parameter tuning experiments are illustrated in Figure~\ref{fig:real-case}, where we initiated the process with two observations and monitored the log regret values over a span of 45 steps. These findings suggest that the VES-Gamma method outperforms the other two acquisition functions in these specific hyperparameter tuning contexts.

\begin{figure}[htp]
    \centering
    \begin{minipage}{0.49\textwidth}
        \includegraphics[width=0.9\textwidth]{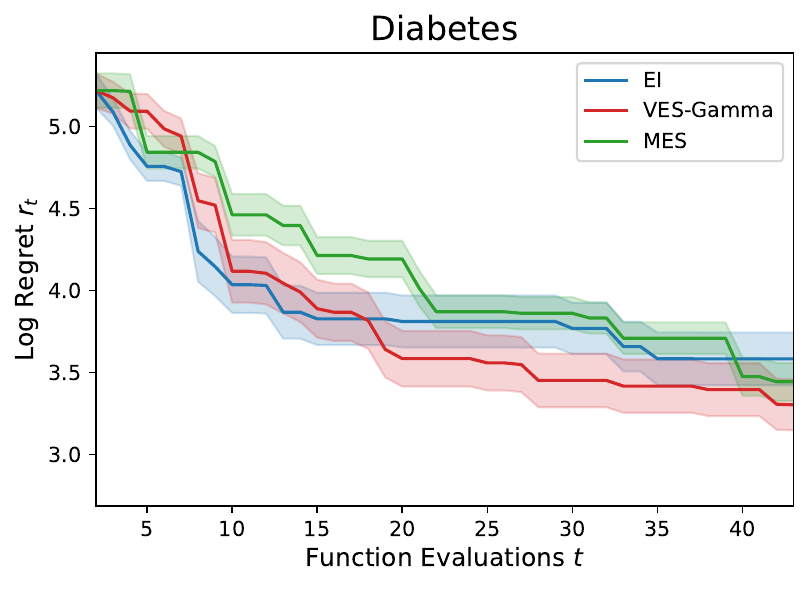}
    \end{minipage}\hfill
    \begin{minipage}{0.49\textwidth}
        \includegraphics[width=0.9\textwidth]{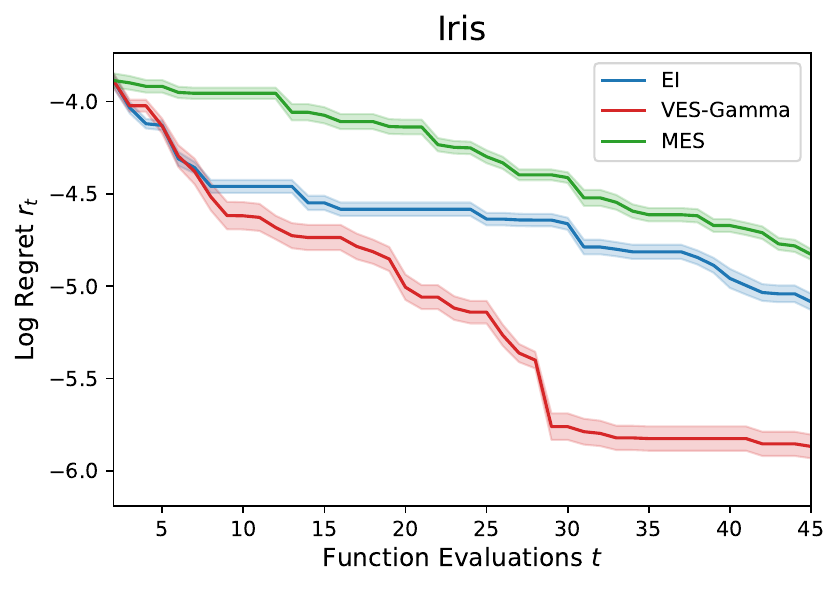}
    \end{minipage}
    \caption{Performance of VES, EI, and MES for XGBoost hyper-parameter tuning on the diabetes (left) and iris (right) datasets. The figure presents the mean and 0.1 standard deviation of ten trials with different random initial points.}
    \label{fig:real-case}
\end{figure}

\section{Conclusion}
\label{sec:conclusion}
In this paper, we highlighted the intrinsic connection between Expected Improvement (EI) and Max-value Entropy Search (MES). We have demonstrated that EI can be considered a specific case of MES. Furthermore, we introduced the concept of addressing the Entropy Search problem using Variational Inference within our Variational Entropy Search (VES) framework. The VES-Gamma algorithm, a key contribution of this work, has been shown to be effective in comparison to EI and MES across a variety of test functions.

While we have proposed using the parameter $k$ as a measure of confidence in the EI acquisition function, a comprehensive analysis of why a larger $k$ value indicates reduced reliance on EI has not been undertaken. Furthermore, our VES-Gamma algorithm extends the variational distributions only to Gamma distributions. This leaves open the possibility of exploring additional distributions. Future research could broaden the scope of variational distributions to encompass, for instance, chi-squared and generalized Gamma distributions. Such an expansion would provide further insights into how these methods might enhance the EI algorithm, offering new avenues for optimization in Bayesian frameworks.

\section*{Acknowledgements}
We thank Carl Hvarfner and Luigi Nardi for discussions about details in the entropy search method, as well as their support on the experiment coding.
N.\ Cheng was supported by the AFOSR awards FA9550-20-1-0138 with Dr.\ Fariba Fahroo as the program manager. 
S.\ Becker was supported by DOE awards DE-SC0023346 and DE-SC0022283.
The views expressed in the article do not necessarily represent the views of the AFOSR or the U.S.\ Government.

\bibliographystyle{abbrv}
\bibliography{references}  

\appendix
\section{Variational Lower Bound}
\label{apdx:ba-lb}
In this section we will show the proof of Theorem~\ref{thm:eslb}. 
\begin{proof}
\begin{equation}
\label{eq:eslb-pf}
\begin{aligned}
    \alpha_{\text{MES}}(\bm x) &= \mathbb{H}[y^*\vert\Dc_t] - \E_{p(y_{\bm x}\vert\Dc_t)}\mathbb{H}[y^*\vert \Dc_t, y_{\bm x}]\\
                          &= \mathbb{H}[y^*\vert\Dc_t] + \mathbb{E}_{p(y^*, y_{\bm x}\vert\Dc_t)}[\log(p(y^*\vert \Dc_t, y_{\bm x}))]\\
                          &= \mathbb{H}[y^*\vert\Dc_t] + \mathbb{E}_{p(y^*,y_{\bm x}\vert\Dc_t)}\bigg[\log(\frac{p(y^*\vert \Dc_t, y_{\bm x})q(y^*\vert \Dc_t, y_{\bm x})}{q(y^*\vert \Dc_t, y_{\bm x})})\bigg]\\
                          &= \mathbb{H}[y^*\vert\Dc_t] + \mathbb{E}_{p(y^*,y_{\bm x}\vert\Dc_t)}[\log(q(y^*\vert \Dc_t, y_{\bm x}))] + \mathbb{E}_{p(y_{\bm x}\vert\Dc_t)}[\KL\big(p(y^*\vert \Dc_t, y_{\bm x})\|q(y^*\vert \Dc_t, y_{\bm x})\big)]\\
                              &\geq \mathbb{H}[y^*\vert\Dc_t] + \mathbb{E}_{p(y^*,y_{\bm x}\vert\Dc_t)}[\log(q(y^*\vert\Dc_t,y_{\bm x}))].
\end{aligned}
\end{equation}
The inequality is tight if and only if $\mathbb{E}_{p(y_{\bm x}\vert\Dc_t)}[\KL\big(p(y^*\vert \Dc_t, y_{\bm x})\|q(y^*\vert \Dc_t, y_{\bm x})\big)] = 0$, which implies $p(y^*\vert \Dc_t, y_{\bm x}) = q(y^*\vert \Dc_t, y_{\bm x})$ for all $y_{\bm x}\vert\Dc_t$.
\end{proof}

\end{document}